%% file: colm2026_conference.tex
\documentclass{article} 
\usepackage[preprint]{colm2026_conference}

\usepackage{microtype}
\usepackage{hyperref}
\usepackage{url}
\usepackage{booktabs}
\usepackage{graphicx}
\usepackage{subfigure}

\usepackage{amsmath}
\usepackage{amssymb}
\usepackage{mathtools}
\usepackage{amsthm}

\usepackage[table]{xcolor}  

\usepackage{graphicx}   
\usepackage{booktabs}    
\usepackage{hyperref}    
\usepackage{array}       
\usepackage{ragged2e}    
\usepackage{cleveref}
\usepackage{wrapfig}
\usepackage{multirow} 

\usepackage[textsize=small]{todonotes}

\usepackage{lineno}

\definecolor{darkblue}{rgb}{0, 0, 0.5}
\hypersetup{colorlinks=true, citecolor=darkblue, linkcolor=darkblue, urlcolor=darkblue}

\title{Skill-CMIB: Multimodal Agent Skill for Consistent Action via Conditional Multimodal Information Bottleneck}

\author{
Zihan Huang$^{1}$\thanks{These authors contributed equally to this work.}, Junda Wu$^{1*}$, Tong Yu$^{2}$, Qianqi Yan$^{3}$, Rohan Surana$^{1}$, \\
\textbf{Uttaran Bhattacharya$^{2}$, Lina Yao$^{4}$, Xin Eric Wang$^{3}$, Julian McAuley$^{1}$}
\\
$^{1}$UC San Diego \quad
$^{2}$Adobe Research \quad
$^{3}$UC Santa Barbara \quad 
$^{4}$University of New South Wales \\
\texttt{\{zih043,juw069,rsurana,jshang,jmcauley\}@ucsd.edu} \\
\texttt{tyu@adobe.com} \quad \texttt{\{qianqi,ericxwang\}@ucsb.edu} \quad \texttt{lina.yao@unsw.edu.au}
}

\theoremstyle{plain}
\newtheorem{assumption}{Assumption}[section]
\newtheorem{definition}[assumption]{Definition}

\newtheorem{lemma}[assumption]{Lemma}

\theoremstyle{remark}

\begin{document}

\ifcolmsubmission
\linenumbers
\fi

\maketitle

\input{content/0_abstract}
\input{content/1_intro}
\input{content/2_related}

\input{content/4_method}

\input{content/6_exp}
\input{content/8_conclusion}

\bibliography{main}
\bibliographystyle{colm2026_conference}

\appendix
\input{content/9_app}

\end{document}

%% file: content/0_abstract.tex
\begin{abstract}
While LLM-based agents increasingly excel at planning and executing long action sequences,
their execution often remains inconsistent across trials, limiting the reliability.
Consolidating agent consistency requires distilling trial-and-error trajectories into reusable skills that preserve task-relevant invariants while discarding trajectory-specific noise.
However, in multimodal settings, the key challenge is not only that useful invariants are distributed across vision and language information,
but that different modalities support different kinds of reusable skill content:
while some agent skills are verbalizable and interpretable, others reside in dense perceptual evidence that text alone cannot capture.
Text-only skills may lose complementary perceptual cues, whereas storing text and perception naively in parallel introduces redundancy and noise.
Existing inference-time methods, such as self-consistency, improve reliability through costly multi-sample decoding,
while existing internalization strategies lack a principled way to separate verbalizable skill content from residual perceptual information.
To address this, we introduce the \textbf{Conditional Multimodal Information Bottleneck (CMIB)}, a principled method for multimodal skill construction.
CMIB begins with a joint bottleneck over multimodal skills and derives an exact sequential decomposition into
(1) a text-stage bottleneck that distills interpretable skill cards, and
(2) a conditional multimodal bottleneck that compresses only the residual information in perception that remains predictive beyond text.
Unlike naive two-stream formulations, CMIB explicitly conditions the multimodal latent on the text skill,
thus structurally reducing cross-modal redundancy and enabling independent control over textual and perceptual compression.
We further instantiate CMIB with a variational objective that makes its conditional decomposition tractable to optimize,
yielding reusable multimodal skills that improve execution stability without incurring multi-sample inference overhead.

\end{abstract}

%% file: content/1_intro.tex
\section{Introduction}

Multimodal LLM agents increasingly operate in environments where successful action depends on both language and perception, 
such as web navigation~\citep{DBLP:conf/nips/DengGZCSWSS23, DBLP:conf/iclr/ZhouX0ZLSCOBF0N24}, 
GUI control~\citep{DBLP:journals/corr/abs-2307-10088, zhang2024mobileenvbuildingqualifiedevaluation,nguyen2025gui}, 
and multimodal decision making and reasoning~\citep{DBLP:conf/icml/DriessXSLCIWTVY23, brohan2023rt2visionlanguageactionmodelstransfer,wu2024personalizedmllm,wu2024visualprompting,wu2025docreact}.
In these settings, the same task can yield markedly different action sequences across trials, even when the underlying policy is unchanged~\citep{shinn2023reflexionlanguageagentsverbal,xia2025sand}.
A common technique to improve model self-consistency at inference time is by 
sampling multiple trajectories and aggregating them through majority voting or related agreement-based procedures~\citep{DBLP:conf/iclr/0002WSLCNCZ23,wu2024decot,wu2025ocean,wu2025ctrls,yu2025explainable}.
While effective in some settings, these approaches rely on substantial decoding cost and do not directly produce a reusable procedure~\cite{wang2025dice} of how certain agent actions are consistent. 
Thus, instead of repeatedly resampling actions, distilling task-relevant invariants from trial-and-error experience into compact multimodal skills can better improve agent consistency.

Agent skills provide a natural interface for reusable control because 
they can condition planning, tool use, and action selection without modifying the underlying task model~\citep{zhang2025equipping, xu2026agentskillslargelanguage, wu2026agent}. 
However, constructing multimodal skills can be challenging, since trial-and-error trajectories contain both reusable procedural structure 
and trajectory-specific noise~\cite{yan2024listitems,li2025commit,wu2025mitigating}, while these signals are distributed unevenly across text and visual information. 
A text-only skill card is interpretable and easy to index in a skill library~\citep{zhang2025equipping,zhang2026memskill,huang2026amps,wang2026scenealign}, but it may discard dense visual evidence that cannot be faithfully verbalized. 
On the other hand, a naive two-stream design that stores textual and visual representations in parallel offers no principled mechanism for deciding what should be captured symbolically 
and what should remain in a latent multimodal channel. 
As a result, the resulting skill can be redundant, noisy, or poorly transferable across task instances~\citep{DBLP:conf/nips/StepputtisCPLBA20}.

To address this problem, we introduce \textbf{Skill-CMIB}, a multimodal skill construction framework grounded in a \emph{Conditional Multimodal Information Bottleneck (CMIB)}. 
We derive a sequential information bottleneck decomposition into two stages: 
a text-stage bottleneck that distills an interpretable skill card, 
and a conditional multimodal bottleneck that compresses only the residual multimodal information that remains predictive beyond the text card. 
This decomposition matches the structure of reusable agent skills. 
The text component serves as the symbolic interface for retrieval, indexing, and explanation,
 while the conditional multimodal component preserves residual perceptual evidence that text alone cannot capture~\citep{DBLP:conf/icml/RadfordKHRGASAM21, DBLP:conf/iclr/GoyalISALBBL19}.

By conditioning the multimodal latent on the text skill, 
CMIB directly penalizes multimodal information that is already explained by the text stream, thereby reducing cross-modal redundancy. 
This yields a representation with three desirable properties:
(1) it remains \textbf{sufficient} for predicting task-relevant outcomes by preserving both the textual procedure and the residual multimodal evidence; 
(2) it is \textbf{minimal} because each stage is explicitly compressed;
(3) and it is \textbf{complementary} that the textual card captures reusable procedural semantics, 
while the multimodal latent focuses on information that is useful and not already contained in the card.
Therefore, CMIB provides a formal answer to what a multimodal skill should store, and in which channel it should be stored.

We further show tractable bounds on information of this information-theoretic view~\cite{wu2022context,liu2025llmcausal}. 
The text stage is realized through prompted skill-card generation under a utility-and-length trade-off, 
producing a compact card that can be stored and reused as a discrete skill artifact. 
The conditional multimodal stage is realized through a variational posterior and prior conditioned on the selected card, 
together with a lightweight projection that fuses the latent into the frozen task model as a soft control prefix \citep{tishby2000information,poole2019variational,mahabadi2021variational,goyal2019infobot,wu2023infoprompt,huang2025traceable}. 
As a result, Skill-CMIB improves agent control without requiring repeated multi-sample decoding at deployment and without updating the backbone task model itself.
We evaluate Skill-CMIB on multimodal agent benchmarks including \texttt{Multimodal-Mind2Web} and \texttt{Mind2Web}, comparing against direct prompting, inference-time self-consistency, and text-only skill cards. The empirical results show that CMIB improves task success and action consistency while offering a more efficient alternative to repeated inference-time sampling. 
We summarize our contributions as follows:
\begin{itemize}
    \item We present an information-theoretic formulation of \emph{multimodal agent skills}, 
    characterizing how reusable procedural structure and task-relevant visual evidence can be organized for consistent multimodal agent behavior.
    \item We propose \textbf{CMIB}, a sequential decomposition of a joint information bottleneck that separates multimodal skill construction into an interpretable text-stage bottleneck and a conditional multimodal bottleneck for residual visual information.
    \item We derive a practical realization, \textbf{Skill-CMIB}, based on tractable variational information bounds, enabling reusable multimodal skill construction for frozen backbone agents without requiring policy parameter updates.
    \item We empirically validate \textbf{Skill-CMIB} on multimodal agent benchmarks, showing improved action consistency and task success compared with direct prompting, inference-time self-consistency, and text-only skill baselines.
\end{itemize}

%% file: content/2_related.tex
\section{Related Works}

\subsection{Agent Skills}

Recent work formalizes \emph{agent skills} as modular, reusable procedures that extend agents beyond atomic tool calls, spanning product-oriented descriptions~\citep{zhang2025equipping,nguyen2025gui,wu2025docreact}, systematic perspectives on skills versus tools~\citep{jiang2026sok}, and surveys on architecture, acquisition, and procedural-memory views of skills~\citep{xu2026agentskillslargelanguage,wu2026agent,huang2025agentic,huang2025surveyfm,wu2024coral}.
A parallel line learns skills or procedural memory from interaction traces via hierarchical memory, reinforcement learning over skill libraries, or non-parametric procedural memory~\citep{fang2025memp,wang2025reinforcement,mi2026procmem,jiang2026xskill}.
Personalized adaptation of LLMs as a relevant context for skill libraries~\cite{zhang2024personalization,xie2025survey,ni2026survey,wang2025self}.
These efforts establish \emph{what} a skill library is and how skills are acquired, but typically expose skills as text or opaque parameters~\cite{wu2024visualprompting,wu2024personalizedmllm} and do not specify how interpretable language and complementary perceptual evidence can be jointly compressed.

\subsection{Behavioral Reliability in LLM-Based Agents}

LLM-based agents exhibit trial-to-trial variability and measurable self-disagreement~\citep{mehta2026agentsdisagreethemselvesmeasuring,xia2025sand,shinn2023reflexionlanguageagentsverbal,wang2025dice}.
The standard inference-time remedy is self-consistency and variants that sample and aggregate multiple outputs~\citep{DBLP:conf/iclr/0002WSLCNCZ23,wang2022self,aggarwal-etal-2023-lets,wang2024soft}, which improves robustness but multiplies decoding cost in sequential settings.
Alternatives \emph{internalize} consistency via post-training~\citep{wu2024decot,samanta2026selfimprovementlanguagemodelsposttraining,kveton2025active,wu2025ocean}, while the information bottleneck~\citep{wu2022context,liu2025llmcausal,tishby2000information} and variational surrogates~\citep{poole2019variational,mahabadi2021variational} instead compress experience into sufficient minimal representations, with instantiations in RL and soft control~\citep{kveton2025active,wu2025incontext,huang2025pluralistic,surana2025mass,mundada2026wsgrpo,DBLP:conf/iclr/GoyalISALBBL19,wu2023infoprompt,huang2025traceable}.
Skill-CMIB follows the latter philosophy for multimodal skills: it avoids repeated voting at deployment and structures compression so a text skill card and a conditional multimodal latent remain complementary~\citep{wang2026scenealign,wu2021deconfounded,li2025commit,DBLP:conf/icml/RadfordKHRGASAM21,DBLP:conf/nips/StepputtisCPLBA20}.

%% file: content/4_method.tex
\section{Skill-CMIB: Multimodal Skills via Conditional Multimodal Information Bottleneck}

We consider a multimodal agentic system that produces trial-and-error trajectories through interaction with an environment. 
We extend the agent skill rollout process in~\Cref{pre:rollout} to multimodality: 
\begin{equation}
    \tau^{(k)} = \bigl(x_t^{(k)},\; m_t^{(k)},\; a_t^{(k)},\; o_t^{(k)},\; f_t^{(k)}\bigr)_{t=1}^{T_k}, \qquad \tau^{(k)}\in \mathcal{B}
    \label{method:rollout}
\end{equation}
where $x_t$ denotes textual prompts, $m_t$ denotes visual pixel information, $a_t$ is the agent's action, $o_t$ is the environment response, and $f_t$ is feedback from the environment. 

\begin{definition}[Multimodal Agent Skill]
Let $X$ and $M$ denote the aggregated textual and multimodal content extracted from a set of rollout trajectories, and let $Y$ denote the supervision target corresponding to the verifiable reward. A \emph{reusable skill} is a structured representation of $(X,M)$ defined as
\begin{equation}\label{method:skill-def}
    S = (c, z) \sim p_{\psi}(S \mid X, M), \qquad c \in \mathcal{C}, \quad z \in \mathbb{R}^d,
\end{equation}
where $c$ is a natural-language \emph{text skill card} used for retrieval, indexing, and coarse control guidance, and $z$ is a \emph{multimodal latent vector} that retains dense perceptual details beyond what can be faithfully verbalized. In this sense, $S=(c,z)$ is a reusable latent skill distilled from $(X,M)$ and intended to retain the task-relevant information needed to predict $Y$.
\end{definition}

The multimodal skill pair $(c, z)$, at deployment, is fused and injected into a frozen task LLM as soft tokens.
Intuitively, the trajectory signal decomposes into:
(1) the \textbf{invariant skill mechanism}, which is a reusable, transferable structure determining what should be done as this skill;
(2) and \textbf{nuisance variation}, which is instance-specific details and noise from both modalities.
The desired multimodal skill should satisfy $Y \perp \!\!  \perp (X, M) \mid S$ (sufficiency), depend primarily on the invariant task skill information,
and expose a text interface for retrieval. Thus, the multimodal skill optimization is to enable the two modalities of $S$ jointly achieve sufficiency and invariance while remaining complementary.

\subsection{Conditional Multimodal Information Bottleneck (CMIB)}\label{sec:cmib}

We extend the information bottleneck in~\Cref{pre:IB} to a joint bottleneck over the multimodal skill $S=(c,z)$ defined in~\Cref{method:skill-def}:
\begin{equation}
\mathcal{L}_{\mathrm{joint}}
= I\bigl((X,M);\,(c,z)\bigr) - \beta\, I\bigl((c,z);\;Y\bigr).
\label{eq:cmib_joint}
\end{equation}
This objective compresses the rollout content $(X,M)$ while preserving the task-relevant information needed to predict the verifiable target $Y$ (illustrated in~\Cref{fig:skill_cmib}). The following lemma shows that \Cref{eq:cmib_joint} admits an exact two-stage factorization, separating a text-stage bottleneck in $c$ from a conditional multimodal bottleneck in $z$ given $c$.

\begin{lemma}[Factorization underlying CMIB]
\label{lem:cmib}
The objective in \Cref{eq:cmib_joint} admits an exact decomposition into a text-stage term involving $c$ and a conditional multimodal term involving $z$ given $c$. Motivated by this factorization, we define the Conditional Multimodal Information Bottleneck (CMIB) by introducing stage-specific trade-off coefficients:
\begin{equation} \label{eq:cmib_objective}
\mathcal{L}_{\mathrm{CMIB}}
=
\underbrace{\Bigl[I\bigl((X,M);c\bigr)-\beta_c\,I(c;Y)\Bigr]}_{\text{Text bottleneck}}
+
\underbrace{\Bigl[I\bigl((X,M);z\mid c\bigr)-\beta_z\,I(z;Y\mid c)\Bigr]}_{\text{Conditional multimodal bottleneck}}.
\end{equation}
When $\beta_c=\beta_z=\beta$, \Cref{eq:cmib_objective} reduces to the exact factorization of \Cref{eq:cmib_joint}.
\end{lemma}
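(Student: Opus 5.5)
The identity follows from applying the chain rule for mutual information to both terms of \Cref{eq:cmib_joint}, with the pair $(c,z)$ split into $c$ first and $z$ second. No distributional assumptions are needed beyond finiteness of the quantities involved. The skill $S=(c,z)$ is drawn from $p_\psi(S\mid X,M)$, so everything is defined on the joint law of $(X,M,Y,c,z)$.

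\textbf{Compression term.} The chain rule gives
\begin{equation*}
I\bigl((X,M);(c,z)\bigr)=I\bigl((X,M);c\bigr)+I\bigl((X,M);z\mid c\bigr).
\end{equation*}
This holds for any joint distribution. The conditional term is the expected KL divergence between $p(z\mid X,M,c)$ and $p(z\mid c)$.

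\textbf{Prediction term.} Applying the same rule with $Y$ in place of $(X,M)$ gives
\begin{equation*}
I\bigl((c,z);Y\bigr)=I(c;Y)+I(z;Y\mid c).
\end{equation*}

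\textbf{Combining.} Substituting both expansions into \Cref{eq:cmib_joint} and regrouping yields
\begin{equation*}
\Bigl[I\bigl((X,M);c\bigr)-\beta I(c;Y)\Bigr]+\Bigl[I\bigl((X,M);z\mid c\bigr)-\beta I(z;Y\mid c)\Bigr].
\end{equation*}
This is exactly \Cref{eq:cmib_objective} with $\beta_c=\beta_z=\beta$. The first bracket involves only the marginal law of $c$ together with $(X,M,Y)$, which makes it the text-stage term. The second bracket involves $z$ only through its law conditional on $c$, which makes it the conditional multimodal term. This establishes both the claimed exact factorization and the reduction statement. Allowing $\beta_c\neq\beta_z$ is then a definition rather than something to prove, so there is nothing further to verify.

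\textbf{Main obstacle.} The only delicate point is well-definedness. $z$ is continuous and $c$ ranges over a possibly countably infinite card space $\mathcal{C}$, so I would state that all mutual informations are finite. I would also interpret the chain rule in the measure-theoretic sense, via KL divergences between joint and product-of-marginal laws. Under that convention the chain rule holds without change, including for mixed discrete–continuous variables.

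No Markov assumption such as $Y - (X,M) - (c,z)$ is needed for the equality. I would mention it only as a remark: it ensures each bracket is individually a meaningful bottleneck, with $I(c;Y)\le I((X,M);Y)$ by data processing.
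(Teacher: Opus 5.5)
Your proposal is correct and matches the paper's proof: it applies the chain rule of mutual information to $I((X,M);(c,z))$ and to $I((c,z);Y)$, substitutes both expansions into the joint objective, and regroups, with $\beta_c,\beta_z$ then introduced by definition. Your remarks on finiteness (needed so the subtraction is not $\infty-\infty$) and on the measure-theoretic chain rule for the mixed discrete--continuous pair $(c,z)$ are a reasonable tightening that the paper leaves implicit.
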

The proof is a direct consequence of the chain rule of mutual information as in \Cref{app:proof_cmib}.

\begin{wrapfigure}{r}{0.36\textwidth}
    \centering
    \includegraphics[width=0.36\textwidth]{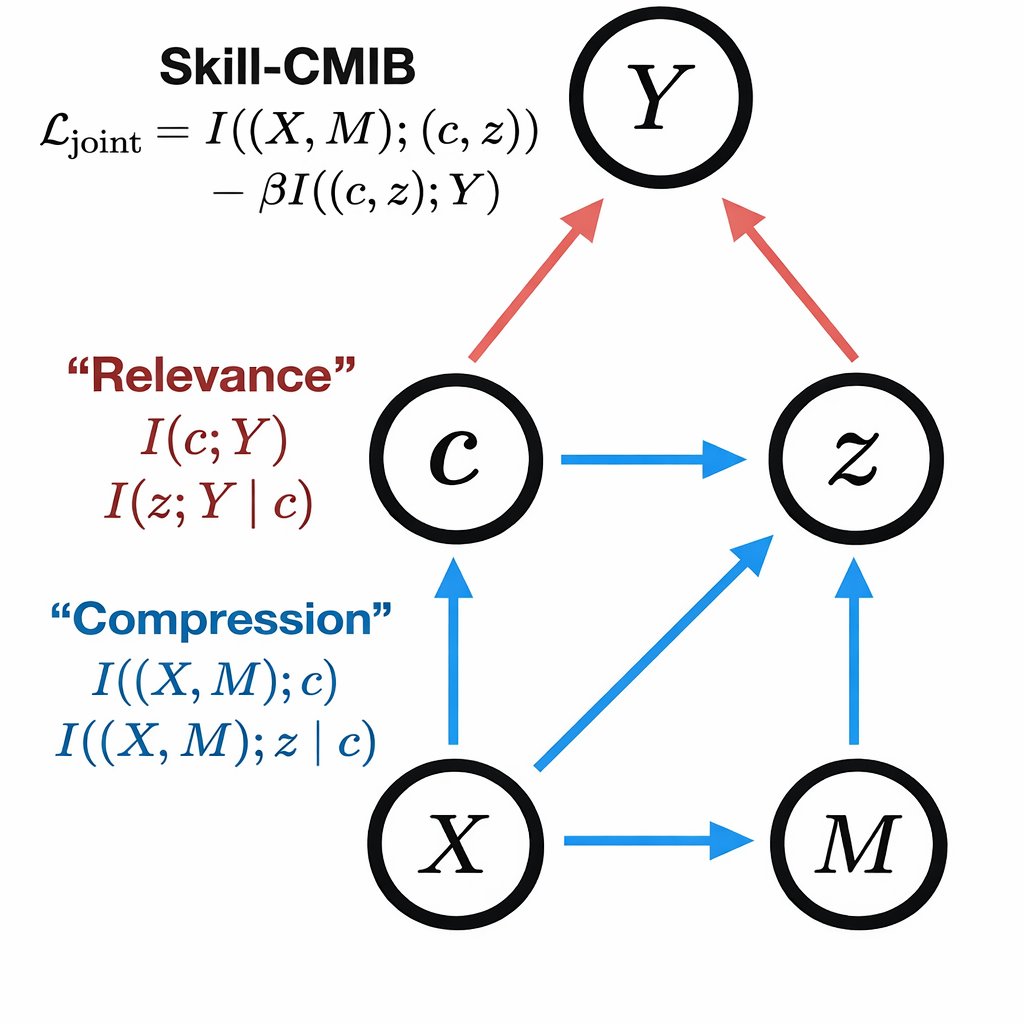}
    \caption{Skill-CMIB illustration.}
    \label{fig:skill_cmib}
\end{wrapfigure}

\Cref{eq:cmib_objective} directly encodes the three design requirements of multimodal skill construction:
\textbf{Sufficiency} is captured by the relevance terms $I(c;Y)$ and $I(z;Y\mid c)$, which preserve the information needed to predict $Y$, with the conditional term measuring the residual predictive value beyond the text card. 
\textbf{Minimality} is imposed by the compression terms $I((X,M);c)$ and $I((X,M);z\mid c))$, which suppress instance-specific noise, with the latter compressing only information not already encoded in $c$. 
\textbf{Complementarity} follows from conditioning the multimodal latent on $c$. 
The text card $c$, optimized through the unconditional bottleneck $I((X,M);c))-\beta_c I(c;Y)$, 
captures reusable semantics of the rollout and serves as the interface for indexing, retrieval, and explanation in skill libraries.
In turn, the conditional bottleneck $I((X,M);z\mid c))-\beta_z I(z;Y\mid c)$ drives $z$ to retain only residual multimodal evidence that remains useful once $c$ is known \citep{li2026skillsbench,xu2026agent}.

CMIB also suggests a natural representation strategy for the agentic workflow. 
At construction time, $c$ and $z$ are stored separately because they are produced by different encoders and serve different roles: 
$c$ provides the symbolic interface for skill management, whereas $z$ preserves perceptual detail that is inefficient to verbalize. 
At inference time, the multimodal information $z$ is projected in to skill prefix,
\begin{equation}
u = g_\omega(z),
\label{eq:cmib_fusion}
\end{equation}
where $g_\omega$ is a lightweight projection module. 
This separation makes the bottleneck traceable: the statistics of the text and the conditional multimodal stream can be monitored independently, 
while the frozen model still receives unified control signals at inference time. 

We now instantiate this view through \textbf{tractable bounds on information} in a multimodal LLM agent. 
Because the information bottleneck terms in \Cref{eq:cmib_objective} are not directly tractable in practice~\citep{tishby2000information, poole2019variational, mahabadi2021variational, huang2025traceable}, we replace them with computable surrogates that track each component of CMIB~\citep{goyal2019infobot, wu2023infoprompt}. 
We begin with the \textbf{text bottleneck}, then derive the \textbf{conditional multimodal bottleneck}, and combine the two into a unified training objective.

\subsection{Tractable Bound for the Text Bottleneck}
The text bottleneck aims to extract a compact skill card $c$ that preserves the task-relevant procedural content of the rollout bundle
while remaining short enough to act as the symbolic interface of the skill library. Formally, the text-stage term of CMIB is
\begin{equation}
    \mathcal{L}_{c} = I((X,M);c)-\beta_c I(c;Y).
    \label{eq:text_ib_orig}
\end{equation}
Here, the compression term discourages unnecessary dependence of $c$ on the rollout content,
while the relevance term encourages $c$ to retain information predictive of the verifiable target $Y$.
Since $c$ is generated by a frozen LLM rather than a trainable probabilistic encoder, we do not optimize \Cref{eq:text_ib_orig} directly. Instead, we define the feasible candidate set
\begin{equation}
\label{eq:text_ib_transformed}
\mathcal{J}_{c}(c;\mathcal B)
=
|c|-\beta_c\,\widehat U(c;\mathcal B),
\quad
c^*
=
\arg\min_{c\in\mathcal{C}_{L_c}(X)}
\mathcal{J}_{c}(c;\mathcal B),
\end{equation}
where $|c|$ denotes the length of $c$ (e.g., number of tokens) and $\widehat U(c;\mathcal B)$ is the task-specific utility score of card $c$ evaluated on the rollout bundle $\mathcal B$ in \Cref{method:rollout}. Let $\mathcal{C}_{L_c}(X)\subseteq\mathcal{C}$ denote the set of candidate skill cards obtainable from the aggregated rollout text $X$ under length budget $L_c$. In this formulation, the objective $\mathcal{J}_{c}$ tracks the relevance term $I(c;Y)$, while restricting $c$ to $\mathcal{C}_{L_c}(X)$ enforces the compression budget associated with $I((X,M);c)$.

Because \Cref{eq:text_ib_transformed} optimizes over discrete skill cards produced by a frozen LLM, we realize it through prompting rather than direct optimization. 
Let $\Pi_c(X,L_c)$ denote the prompt constructor that maps the aggregated rollout text $X$ and the length budget $L_c$ to a skill-card generation prompt,
and let
\begin{equation}
c^* \sim \pi_{\mathrm{sc}}(\cdot \mid X,L_c)
\;=\;
\pi_{\mathrm{sc}}(\cdot \mid \Pi_c(X,L_c)).
\label{eq:text_ib_prompt}
\end{equation}
be the induced generation distribution under the frozen LLM. 
The prompt $\Pi_c(X,L_c)$ is instantiated by a \textbf{progressive summarization pipeline}: trajectory-level evidence is first extracted from each rollout, 
then aggregated across the $K$ rollouts, and finally formatted into a structured card. 
The selected output $c^*$ from \Cref{eq:text_ib_prompt} is thus the tractable approximation to the original text bottleneck in \Cref{eq:text_ib_orig},
and serves as the discrete interface used later for retrieval, indexing, and explanation.

\subsection{Tractable Bound for the Conditional Multimodal Bottleneck}

Since the text stage has already produced the discrete component $c^*$, the remaining problem is to construct the residual multimodal latent $z$ conditioned on that fixed skill card. 
Under this realization, the conditional multimodal term in \Cref{eq:cmib_objective} becomes
\begin{equation}
\mathcal{L}_{z}
=
I((X,M);z\mid c^*)-\beta_z I(z;Y\mid c^*).
\label{eq:mm_ib_orig}
\end{equation}
The following lemma shows that this objective admits a tractable variational surrogate based on a text-conditioned posterior encoder and prior.

\begin{lemma}[Variational surrogate of the conditional multimodal bottleneck]
\label{lem:mm_ib_variational}
Fix a text card $c^*$, and suppose the latent variable $z$ is generated by the encoder
$q_\theta(z\mid M,c^*)$, so that $Z \perp X \mid (M,c^*)$.
Let $g_\omega$ be the projection map introduced in \Cref{eq:cmib_fusion}, which maps the multimodal latent $z$ into the control space of the frozen task model $\pi_{\mathrm{tsk}}$.
For any conditional prior $r_\phi(z\mid c^*)$, define
\begin{equation}
\mathcal{J}_{z}(\theta,\phi;c^*)
=
\mathbb{E}_{\substack{(M,Y)\sim p(\cdot,\cdot\mid c^*)\\ z\sim q_\theta(\cdot\mid M,c^*)}} \left[
\log \frac{q_\theta(z\mid M,c^*)}{r_\phi(z\mid c^*)}
-
\beta_z \log \pi_{\mathrm{tsk}} \left(
Y \mid [\,g_\omega(z);\; c^*;\; \mathcal{B}\,]
\right)
\right].
\label{eq:mm_ib_transformed}
\end{equation}
Then
\begin{equation}
\mathcal{L}_{z}
\;\le\;
\mathcal{J}_{z}(\theta,\phi;c^*)-\beta_z H(Y\mid c^*).
\label{eq:mm_ib_bound}
\end{equation}
Equivalently, up to the additive constant $\beta_z H(Y\mid c^*)$, the surrogate $\mathcal{J}_{z}(\theta,\phi;c^*)$ upper-bounds the original conditional multimodal bottleneck.
\end{lemma}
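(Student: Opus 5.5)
The plan is to bound the two terms of $\mathcal{L}_z$ in \Cref{eq:mm_ib_orig} separately, using the two standard variational bounds: an upper bound on the compression term through the conditional prior $r_\phi$, and a lower bound on the relevance term through the frozen decoder $\pi_{\mathrm{tsk}}$. Throughout, $c^*$ is held fixed, so every quantity is computed under the conditional law $p(\cdot\mid c^*)$. The joint law is $p(X,M,Y\mid c^*)\,q_\theta(z\mid M,c^*)$, which encodes $Z\perp X\mid(M,c^*)$.

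For the compression term, first use $Z\perp X\mid(M,c^*)$ together with the chain rule:
\[
I((X,M);z\mid c^*)=I(M;z\mid c^*)+I(X;z\mid M,c^*)=I(M;z\mid c^*).
\]
Next write
\[
I(M;z\mid c^*)=\mathbb{E}\bigl[\log q_\theta(z\mid M,c^*)-\log p(z\mid c^*)\bigr],
\]
where $p(z\mid c^*)=\mathbb{E}_{M}\,q_\theta(z\mid M,c^*)$ is the aggregate posterior. Adding and subtracting $\log r_\phi(z\mid c^*)$ gives
\[
I(M;z\mid c^*)=\mathbb{E}\Bigl[\log\frac{q_\theta(z\mid M,c^*)}{r_\phi(z\mid c^*)}\Bigr]-\mathrm{KL}\bigl(p(z\mid c^*)\,\|\,r_\phi(z\mid c^*)\bigr).
\]
Since the KL term is nonnegative, the first piece of \Cref{eq:mm_ib_transformed} upper-bounds the compression term. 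Because $Y$ plays no role in this piece, taking the expectation over $(M,Y)$ rather than over $M$ alone changes nothing.

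For the relevance term, write $I(z;Y\mid c^*)=H(Y\mid c^*)-H(Y\mid z,c^*)$. Then apply the Barber--Agakov argument: for any conditional distribution $\tilde q(Y\mid z,c^*)$,
\[
-H(Y\mid z,c^*)=\mathbb{E}\log p(Y\mid z,c^*)\ge\mathbb{E}\log\tilde q(Y\mid z,c^*),
\]
because the gap is the expected KL divergence $\mathbb{E}_z\,\mathrm{KL}\bigl(p(Y\mid z,c^*)\,\|\,\tilde q\bigr)\ge 0$. Take $\tilde q(Y\mid z,c^*)=\pi_{\mathrm{tsk}}(Y\mid[g_\omega(z);c^*;\mathcal{B}])$. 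Since $\mathcal{B}$ and $\omega$ are fixed, this is a legitimate, normalized conditional distribution over $Y$ given $(z,c^*)$. This gives
\[
I(z;Y\mid c^*)\ge H(Y\mid c^*)+\mathbb{E}\log\pi_{\mathrm{tsk}}(Y\mid[g_\omega(z);c^*;\mathcal{B}]).
\]
Multiplying by $-\beta_z\le 0$ reverses the inequality. Adding the result to the compression bound yields
\[
\mathcal{L}_z\le\mathcal{J}_z(\theta,\phi;c^*)-\beta_z H(Y\mid c^*),
\]
which is \Cref{eq:mm_ib_bound}. The final sentence of the lemma follows because $H(Y\mid c^*)$ depends on neither $\theta$ nor $\phi$.

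I expect the main obstacle to be the relevance step rather than the KL step. Specifically, I have to justify that $\pi_{\mathrm{tsk}}(\cdot\mid[g_\omega(z);c^*;\mathcal{B}])$ is a valid variational decoder, and that $p(Y\mid z,c^*)$ is the correct true conditional under the assumed Markov structure $Y - (X,M) - z$ given $c^*$. The bundle $\mathcal{B}$ enters only as a fixed conditioning context. If it were instead random and correlated with $Y$, the bound would need to be stated conditionally on $\mathcal{B}$ as well. I would make this explicit by treating $\mathcal{B}$ as fixed, in the same way $c^*$ is.
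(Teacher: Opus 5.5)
Your proposal is correct and follows the paper's proof step for step. You reduce the compression term to $I(M;z\mid c^*)$ via $Z\perp X\mid(M,c^*)$, bound it by the expected $\mathrm{KL}$ to $r_\phi$ through the aggregate-posterior decomposition, and lower-bound $I(z;Y\mid c^*)$ by the cross-entropy (Barber--Agakov) argument with $\pi_{\mathrm{tsk}}$ as the decoder. Your caveat that $\mathcal{B}$ must be a fixed context is a point the paper leaves implicit, since otherwise the decoder sees extra information about $Y$ and the argument only controls $H(Y\mid z,c^*,\mathcal{B})$. The Markov structure you invoke is not actually needed for the relevance step, which holds for any joint law of $(Y,z)$.
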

By \Cref{lem:mm_ib_variational}, the KL term controls the conditional compression term, while the predictive log-likelihood term provides a variational lower bound on the conditional relevance term up to the additive constant $H(Y\mid c^*)$. The full proof is deferred to \Cref{app:mm_ib_variational_proof}. The posterior and prior are parameterized as

\begin{equation}
\begin{aligned}
q_\theta(z\mid M,c^*) &= \mathcal N\bigl(\mu_\theta(M,c^*),\,\Sigma_\theta(M,c^*)\bigr), \\
r_\phi(z\mid c^*) &= \mathcal N\bigl(\mu_\phi(c^*),\,\Sigma_\phi(c^*)\bigr)
\end{aligned}
\label{eq:mm_ib_latent}
\end{equation}
where $\Sigma_\theta$ and $\Sigma_\phi$ are diagonal, $\sigma_\theta(M,c^*)$ is the elementwise standard-deviation vector, 
the posterior conditions multimodal rollout features on the fixed text card, 
and the prior represents the default multimodal expectation induced by $c^*$ alone.
During training, we use the standard reparameterization

\begin{equation}
\begin{aligned}
z &= \mu_\theta(M,c^*) + \sigma_\theta(M,c^*) \odot \epsilon, \\
\epsilon &\sim \mathcal N(0,I), \qquad
\Sigma_\theta(M,c^*) = \mathrm{diag}\!\bigl(\sigma_\theta(M,c^*) \odot \sigma_\theta(M,c^*)\bigr)
\end{aligned}
\label{eq:mm_ib_reparam}
\end{equation}

Finally, the realized latent $z$ is fused with the text card through the control map already introduced in \Cref{eq:cmib_fusion}. Concretely, the projected latent $g_\omega(z)$ is prepended together with the fixed card $c^*$ and the rollout bundle $\mathcal{B}$ to the frozen task model $\pi_{\mathrm{tsk}}$, so that the prediction term in \Cref{eq:mm_ib_transformed} measures how much additional task-relevant information remains in $z$ once the symbolic interface $c^*$ has already been fixed.

\subsection{Overall CMIB Objective}

Combining the text-stage surrogate with the conditional multimodal surrogate yields the overall tractable realization of CMIB. Since the textual skill card is produced by discrete prompting rather than direct gradient-based optimization, we first the best textual skill card $c^*$ and then optimize the continuous multimodal stage conditioned on the selected card. The resulting overall objective is
\begin{equation}
\widetilde{\mathcal L}_{\mathrm{CMIB}}(\theta,\phi;c^*)
=
\mathcal{J}_{c}(c^*;\mathcal B)
+
\mathcal{J}_{z}(\theta,\phi;c^*).
\label{eq:cmib_surrogate}
\end{equation}
~\Cref{eq:cmib_surrogate} is the trackable realization of the proposed information bottleneck~\Cref{eq:cmib_objective} that makes explicit how each information term in CMIB is realized in practice: the text-stage bottleneck is tracked by card length and task utility, while the conditional multimodal bottleneck is realized by the variational KL term and the predictive log-likelihood term. All trainable components are confined to the posterior encoder $q_\theta$, the prior $r_\phi$, and the projection map $g_\omega$, while the frozen task model $\pi_{\mathrm{tsk}}$ is never updated.

The final multimodal skill is a concrete realization of  \Cref{method:skill-def}: after the text stage selects $c^*$ from \Cref{eq:text_ib_prompt}, we instantiate the multimodal skill as
\begin{equation}
S^*=(c^*,z^*), \qquad
c^* \sim \pi_{\mathrm{sc}}(\cdot\mid \Pi_c(X,L_c)), \quad
z^* \sim q_\theta(\cdot\mid M,c^*),
\label{eq:cmib_final_skill}
\end{equation}
which is the realized form of $p_\psi(S\mid X,M)$ under the two-stage CMIB construction in~\Cref{sec:cmib}.
In this way, the text card $c^*$ supplies an interpretable and retrievable procedural interface, while the latent $z^*$ injects complementary multimodal evidence that cannot be faithfully compressed into text alone. The task model thus consumes the learned skill through both an explicit symbolic prompt and a fused soft control state.

%% file: content/6_exp.tex
\section{Experiments}

In this section, we conduct experiments to evaluate the proposed CMIB framework, aiming to validate its theoretical properties and demonstrate its practical advantages in multimodal agentic environments.
Our primary objective is to verify whether CMIB can effectively achieve the information-theoretic goals of sufficiency, minimality, and complementarity, thereby leading to enhanced task performance and inference efficiency.
This investigation is guided by the following research questions:

\begin{itemize}
\item \textbf{RQ1: CMIB Effectiveness.} To what extent does CMIB improve task success rate and trajectory consistency of multimodal agents compared to state-of-the-art baselines?
\item \textbf{RQ2: Action Consistency.} How does CMIB affect action-level consistency across repeated trials compared with vanilla inference and self-consistency?
\item \textbf{RQ3: Ablation Study.} Does the conditional latent vector successfully capture residual perceptual information, and does it exhibit the theoretical collapse property when text is sufficient?
\item \textbf{RQ4: Inference Efficiency and Computational Cost.} Does CMIB mitigate the "prohibitively expensive" nature of sequential action generation compared to inference-time sampling?
\end{itemize}

\textbf{Dataset.} We evaluate CMIB on two benchmarks: \texttt{Multimodal-Mind2Web}~\citep{DBLP:conf/nips/DengGZCSWSS23} and \texttt{ScreenSpot}~\citep{DBLP:conf/acl/ChengSCX0Z024}. 
Mind2Web provides real-world web tasks spanning multiple domains and websites.
ScreenSpot offers a fine-grained grounding benchmark across mobile, desktop, and web platforms, featuring both text and icon/widget elements.

\textbf{Metrics.} For performance evaluation, we follow standard metrics from the Multimodal-Mind2Web benchmark~\citep{DBLP:conf/nips/DengGZCSWSS23, DBLP:conf/acl/PahujaLRGMW0A25}, including element accuracy (Ele. Acc), operation F1 (Op. F1), step success rate (Step SR), and task success rate (SR). To assess action stability, we introduce Step Consistency (StepCons), which measures the pairwise agreement of normalized actions across repeated trials.

\textbf{Model.} We evaluate CMIB framework based on \texttt{Qwen2.5-VL-7B-Instruct}~\citep{qwen2.5-VL}. The CMIB augments this backbone with a skill library. Specifically, a \texttt{Qformer}~\citep{DBLP:journals/pami/ZhangZXT24} and \texttt{MLP} module are used to encode multimodal trajectories into a latent vector, which is then decoded into soft prompts to guide the agent. We compare Agent with CMIB against: (1) \emph{Vanilla Agent} (no skill injection), (2) \emph{Text-Only Skill Card}, and (3) \emph{Self-Consistency}~\citep{wang2022self} with up to $K{=}5$ multi-sample decoding. To ensure a fair comparison, all methods are evaluated on the same splits and candidate sets under identical prompting.

\subsection{(RQ1) CMIB Effectiveness}

We evaluate CMIB on \texttt{Multimodal-Mind2Web} for web navigation and \texttt{ScreenSpot} for GUI grounding. Table~\ref{tab:multimodal_mind2web} compares CMIB against a wide range of baselines, including in-context learning (ICL), supervised fine-tuning (SFT), and methods combining data synthesis with SFT. Without task-specific tuning on LLMs, CMIB consistently outperforms the Qwen2.5-VL-7B-instruction baseline across all splits, achieving evident gains in Step SR. This demonstrates that the conditional multimodal information bottleneck effectively extracts complementary visual-textual cues into skill library. 

\begin{table}[ht]
\label{tab:multimodal_mind2web}
\centering
\resizebox{\textwidth}{!}{%
\begin{tabular}{l c c c c c c c c c c}
\toprule
\multicolumn{1}{c}{\textbf{Method}} &
\multicolumn{3}{c}{\textbf{Cross-Task}} &
\multicolumn{3}{c}{\textbf{Cross-Website}} &
\multicolumn{3}{c}{\textbf{Cross-Domain}} &
\multicolumn{1}{c}{\textbf{Avg. Step SR}} \\
\cmidrule(lr){2-4} \cmidrule(lr){5-7} \cmidrule(lr){8-10}
& \textbf{Ele. Acc} & \textbf{Op. F1} & \textbf{Step SR} &
\textbf{Ele. Acc} & \textbf{Op. F1} & \textbf{Step SR} &
\textbf{Ele. Acc} & \textbf{Op. F1} & \textbf{Step SR} & \\
\midrule
\multicolumn{11}{c}{\textbf{In-Context Learning}} \\
GPT-3.5~\citep{DBLP:conf/acl/PahujaLRGMW0A25} & 19.4 & 59.2 & 16.8 & 14.9 & 56.5 & 14.1 & 25.2 & 57.9 & 24.1 & 18.3 \\
GPT-4~\citep{DBLP:conf/acl/PahujaLRGMW0A25} & 40.8 & 63.1 & 32.3 & 30.2 & 61.0 & 27.0 & 35.4 & 61.9 & 29.7 & 29.7 \\
SeeAct~\citep{DBLP:conf/icml/ZhengGK0024} & 46.4 & 73.4 & 40.2 & 38.0 & 67.8 & 32.4 & 42.4 & 69.3 & 36.8 & 36.5 \\
\midrule
\multicolumn{11}{c}{\textbf{Supervised Fine-Tuning}} \\

Classification~\citep{DBLP:conf/nips/DengGZCSWSS23} & 26.8 & -- & -- & 21.6 & -- & -- & 24.5 & -- & -- & -- \\
Generation~\citep{DBLP:conf/nips/DengGZCSWSS23}   & 20.2 & 52.0 & 17.5 & 13.9 & 44.7 & 11.0 & 14.2 & 44.7 & 11.9 & 13.5\\
Explorer-4B~\citep{DBLP:conf/acl/PahujaLRGMW0A25} & 48.1 & 88.0 & 44.8 & 49.1 & 87.2 & 45.0 & 46.9 & 87.7 & 44.6 & 44.8 \\
Explorer-7B~\citep{DBLP:conf/acl/PahujaLRGMW0A25} & 51.8 & 88.0 & 48.3 & 56.3 & 89.7 & 52.0 & 50.9 & 88.9 & 48.1 & 49.5 \\
\midrule
\multicolumn{11}{c}{\textbf{Data Synthesis+Supervised Fine-Tuning}} \\
SeeClick-9.6B~\citep{DBLP:conf/acl/ChengSCX0Z024} & 26.3 & 86.2 & 23.7 & 21.9 & 82.9 & 18.8 & 22.1 & 84.1 & 20.2 & 20.9 \\
EDGE-9.6B~\cite{chen2024edgeenhancedgroundedgui} & -- & -- & 30.0 & -- & -- & 21.1 & -- & -- & 22.4 & 24.5 \\
MiniCPM-GUI-3.1B~\citep{chen2025guicoursegeneralvisionlanguage} & 23.8 & 86.8 & 20.8 & 20.3 & 81.7 & 17.3 & 17.9 & 74.5 & 14.6 & 17.6 \\
Scribe-Agent-L.-32B~\citep{shen2024scribeagentspecializedwebagents} & 38.0 & 52.9 & 35.6 & 34.1 & 52.7 & 32.5 & 39.4 & 54.7 & 37.3 & 35.1 \\
AgentTrek-7B~\citep{xu2025agenttrekagenttrajectorysynthesis} & 60.8 & 88.9 & 55.7 & 57.6 & 88.1 & 51.4 & 56.0 & 87.5 & 52.6 & 53.2 \\
Explorer-4B~\citep{DBLP:conf/acl/PahujaLRGMW0A25} & 53.4 & 88.1 & 50.7 & 55.6 & 89.5 & 51.4 & 49.8 & 88.8 & 47.2 & 49.8 \\
Explorer-7B~\citep{DBLP:conf/acl/PahujaLRGMW0A25} & 56.5 & 90.3 & 53.2 & 60.5 & 90.7 & 56.7 & 55.7 & 90.4 & 53.0 & 54.3 \\
\midrule
\multicolumn{11}{c}{\textbf{Skill Based}} \\
Qwen2.5-VL-7B-Inst. & 51.9 & 63.5 & 33.4 & 44.7 & 57.5 & 25.5 & 46.8 & 66.3 & 32.4 & 30.4 \\
\textbf{Skill-CMIB} & 57.0 & 59.8 & 39.0 & 55.4 & 60.8 & 34.1 & 57.4 & 69.5 & 42.9 & 38.7 \\
\bottomrule
\end{tabular}
}
\caption{Results across different settings on Multimodal-Mind2Web~\citep{DBLP:conf/nips/DengGZCSWSS23}.} 
\end{table}

Among other training free methods, CMIB ourperforms ICL baselines including SeeAct and GPT-4, highlighting the advantage of structured multimodal skill modeling over pure prompting. While models trained with additional synthetic data such as AgentTrek-7B, Explorer-7B achieve higher absolute Step SR, CMIB remains competitive.

Table \ref{tab:screenspot} further shows that CMIB achieves superior or competitive performance on \texttt{ScreenSpot}, particularly in challenging categories like Mobile Icon/Widget and Web Text, reinforcing its ability to leverage complementary multimodal cues, demonstrate the effectiveness of CMIB.

\begin{table}[htbp]
\label{tab:screenspot}
\centering
\scriptsize
\vspace{-1em}
    \begin{tabular}{lccccccc}
    \toprule
    \multirow{2.5}{*}{\textbf{LVLMs (Size)}} & \multicolumn{2}{c}{\textbf{Mobile}} & \multicolumn{2}{c}{\textbf{Desktop}} & \multicolumn{2}{c}{\textbf{Web}} & \multirow{2.5}{*}{\textbf{Avg.}} \\
    \cmidrule(lr){2-3} \cmidrule(lr){4-5} \cmidrule(lr){6-7}
    & Text & Icon/Widget & Text & Icon/Widget & Text & Icon/Widget & \\
    \midrule
    MiniGPT-v2-7B & 8.4 & 6.6 & 6.2 & 2.9 & 6.5 & 3.4 & 5.7 \\
    Qwen-VL-9.6B & 9.5 & 4.8 & 5.7 & 5.0 & 3.5 & 2.4 & 5.2 \\
    GPT-4V & 22.6 & 24.5 & 20.2 & 11.8 & 9.2 & 8.8 & 16.2 \\
    Fuyu-8B & 41.0 & 1.3 & 33.0 & 3.6 & 33.9 & 4.4 & 19.5 \\
    CogAgent-18B & 67.0 & 24.0 & 74.2 & 20.0 & 70.4 & 28.6 & 47.4 \\
    SeeClick-9.6B & 78.0 & 52.0 & 72.2 & 30.0 & 55.7 & 32.5 & 53.4 \\
    \midrule
    Qwen2.5-VL-7B-Inst. & 60.8 & 53.3 & 57.7 & 42.1 & 51.7 & 46.6 & 53.0 \\
    \textbf{Skill-CMIB} & 64.8 & 65.1 & 54.1 & 46.4 & 60.0 & 49.5 & 57.9 \\
    \bottomrule
    \end{tabular}
\caption{Action Success Rate (\%) compared with baselines on ScreenSpot~\citep{DBLP:conf/nips/DengGZCSWSS23}.}
\vspace{-1.5em}

\end{table}

\subsection{(RQ2) Action Consistency}

Task success alone does not capture the stability of intermediate decisions. To evaluate this, we run each setting for N=3 independent repeats on the same evaluation split and compute \textbf{Step Consistency} (StepCons) to heuristically evaluate action consistency.
\begin{equation}
C_{\text{step}}^{(i,j)}=\frac{1}{|\mathcal{S}_{ij}|}\sum_{t\in\mathcal{S}_{ij}}\mathbf{1}\!\left[\tilde{a}_{t}^{(i)}=\tilde{a}_{t}^{(j)}\right], \qquad
\text{StepCons}=\frac{1}{\binom{N}{2}}\sum_{i<j} C_{\text{step}}^{(i,j)}.
\end{equation}
As shown in Table \ref{tab:mind2web_consistency}, CMIB achieves a substantially higher StepCons score compared to the best self-consistency baseline and the vanilla agent. This indicates that CMIB leads to significantly more stable and reliable action selection.

As $k$ increases from 1 to 5, both Step SR and StepCons improve consistently across splits. With average Step SR rises from 33.77\% to 35.98\%, and StepCons from 0.0866 to 0.1789. This trend aligns with observations in \citep{mehta2026agentsdisagreethemselvesmeasuring}, where larger $k$ reduces variance in multi-step reasoning and the more consistent trajectory brings higher success rate. While self-consistency with $k=5$ achieves comparable or even slightly better element accuracy and Step SR on certain splits, its StepCons remains substantially lower than CMIB, 0.1789 and 0.4144. This gap highlights that CMIB not only maintains competitive task performance but also yields far more consistent intermediate actions, underscoring its effectiveness in improving agent inference stability.

\begin{table}[htbp]
\centering
\scriptsize
\begin{tabular}{llcccccc}
\toprule
\textbf{Setting} & \textbf{Metric} & \textbf{SC (k=1)} & \textbf{SC (k=2)} & \textbf{SC (k=3)} & \textbf{SC (k=4)} & \textbf{SC (k=5)} & \textbf{CMIB (Full)} \\
\midrule
\multirow{4}{*}{\textbf{Cross-Task}}
& Ele. Acc  & 52.98 & 52.98 & 54.43 & 56.23 & 56.78 & 51.35 \\
& Op. F1    & 67.13 & 67.13 & 67.94 & 68.27 & 68.12 & 61.08 \\
& Step SR   & 38.39 & 38.39 & 40.51 & 41.49 & 41.48 & 39.16 \\
& StepCons  & 11.33 & 11.33 & 11.00 & 12.00 & 17.33 & 43.67 \\
\midrule
\multirow{4}{*}{\textbf{Cross-Website}}
& Ele. Acc  & 38.77 & 38.77 & 39.80 & 40.63 & 41.78 & 44.68 \\
& Op. F1    & 61.78 & 61.78 & 61.10 & 62.73 & 64.94 & 62.84 \\
& Step SR   & 24.48 & 24.48 & 24.87 & 25.31 & 26.54 & 28.14 \\
& StepCons  & 6.33  & 6.33  & 10.00 & 12.00 & 16.00 & 42.33 \\
\midrule
\multirow{4}{*}{\textbf{Cross-Domain}}
& Ele. Acc  & 57.02 & 57.02 & 58.02 & 59.23 & 59.88 & 69.88 \\
& Op. F1    & 67.71 & 67.71 & 67.14 & 68.70 & 68.80 & 75.13 \\
& Step SR   & 38.45 & 38.45 & 38.25 & 40.07 & 39.91 & 49.42 \\
& StepCons  & 8.33  & 8.33  & 12.00 & 16.33 & 20.33 & 38.33 \\
\midrule
\textbf{Average} & Avg. Step SR   & 33.77 & 33.77 & 34.54 & 35.62 & 35.98 & 38.91 \\
\textbf{Average} & Avg. StepCons  & 8.66  & 8.66  & 11.00 & 13.44 & 17.89 & 41.44 \\
\bottomrule
\end{tabular}%
\caption{Model Performance on Multimodal-Mind2Web. Results present as percentages.
}
\vspace{-1.5em}
\label{tab:mind2web_consistency}
\end{table}

\subsection{(RQ3) Ablation Study}

To validate the theoretical underpinnings of CMIB, we perform an ablation study on the multimodal stage. We compare full CMIB model against variants using only text cards or independent $c$ and $z$ inputs, as well as Qwen2.5-VL-7B by measuring their Step Success Rate and Information Redundancy between c and z with the average KL divergence of $q_{\theta}(z\mid M,c^*)$ and $r_{\phi}(z\mid c^*)$. Table \ref{tab:ablation_avg_step_redundancy_new} shows that CMIB achieves the highest Step SR while also exhibiting much lower information redundancy $I(c;z)$, indicating that its ability to encourage learning a minimal representation.

We further dissect each component’s contribution. Removing the redundancy constraint between $z$ and $c$ results in a sharp rise in redundancy $I(c;z)$ and a slight drop in Step SR to 39.18. This confirms that the constraint not only encourages a minimal representation of $z$ conditioned on $c$, but also helps $z$ capture complementary visual information beyond the text card. Then, we omit $z$ entirely (\textit{Text Card $c$ only}), which further reduces Step SR to 37.95, indicating that $z$ encodes additional perceptual skill information not covered by $c$. Finally, removing the entire CMIB skill library (\textit{No skill}) leads to the largest performance drop to 30.62 in Step SR, highlighting the effectiveness of the multimodal skill library. These observations confirm that each CMIB component contributes meaningfully to performance.

\begin{table}[h]
\centering
\small
\label{tab:ablation_avg_step_redundancy_new}
\begin{tabular}{lcc}
\toprule
\textbf{Method} & \textbf{Avg. Step SR (\%)} & \textbf{Avg. Redundancy $I(c;z)$} \\
\midrule
No skill (Qwen only) & 30.62 & -- \\
Text Card $c$ only & 37.95 & -- \\
Independent ($c, z$) & 39.18 & 1.69 \\
\textbf{Skill-CMIB ($c, z \mid c$)} & 41.66 & 0.18 \\
\bottomrule
\end{tabular}
\caption{Aggregated results (average over three splits) on Mind2Web, where \textit{No skill (Qwen only)} and \textit{Text Card c only} setting does not have $I(c;z)$.}
\end{table}

\subsection{(RQ4) Inference Efficiency and Computational Cost}

We analyze the trade-off between performance and computational cost. As shown in Table \ref{tab:efficiency}, CMIB offers a significant efficiency advantage. While Self-Consistency improves performance at the cost of a $K \times$ inference overhead, CMIB achieves better or comparable results with minimal additional inference cost, based on a lightweight Q-former and MLP projector and reusable multimodal skill library without increasing the main model's latency.

\begin{figure}[htbp]
    \centering
    \begin{minipage}[t]{0.52\textwidth}
        \centering
        \vspace{-10em}
        \resizebox{\textwidth}{!}{%
        \begin{tabular}{lccc}
            \toprule
            \textbf{Method} & \textbf{Step SR (\%)} & \textbf{Latency (ms/step)} \\
            \midrule
            Qwen2.5-VL-7B & 38.39 & 3483.2 \\
            \textbf{Skill-CMIB} & 39.16 & 3510.4 \\
            Self-Consistency ($K=3$) & 40.51 & 10599.6 \\
            Self-Consistency ($K=5$) & 41.48 & 17680.6 \\
            \bottomrule
        \end{tabular}%
        }
        \caption{Efficiency analysis. Skill-CMIB achieves higher task success rate with significantly lower inference latency, demonstrating a favorable performance–cost trade-off.}
        \label{tab:efficiency}
    \end{minipage}
    \hfill
    \begin{minipage}[t]{0.36\textwidth}
        \centering
        \includegraphics[width=\linewidth]{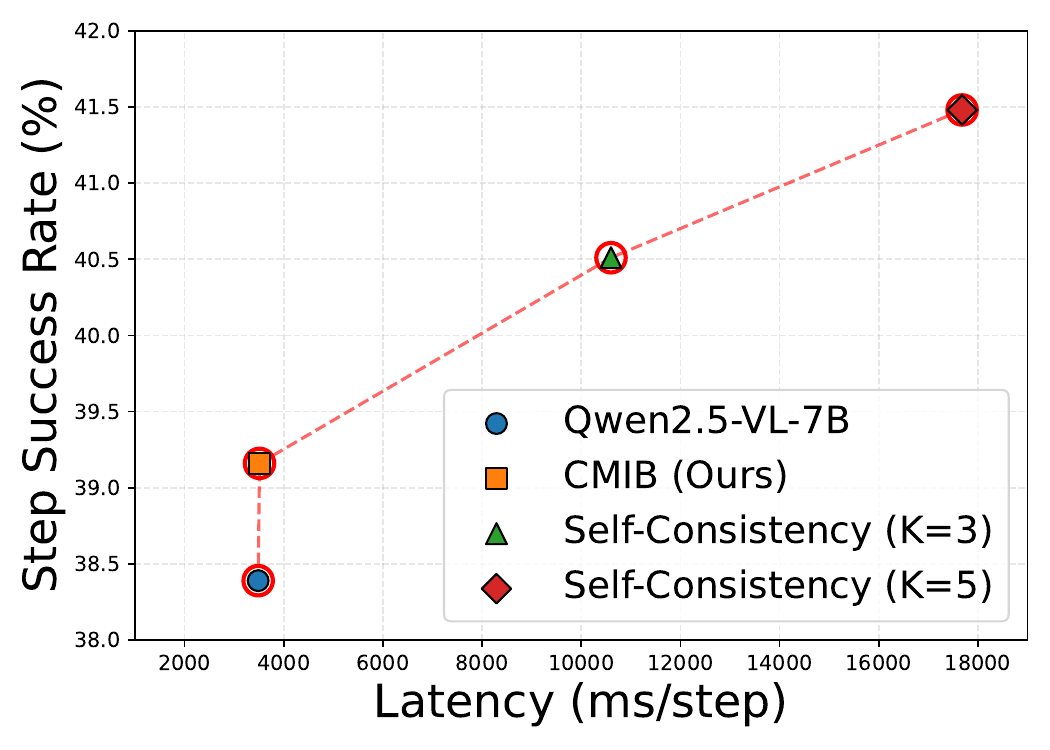}
        \vspace{-2em}
        \caption{Step success rate over computational latency.}
        \label{fig:rq4}
    \end{minipage}
    \vspace{-1.5em}
\end{figure}

%% file: content/8_conclusion.tex
\section{Conclusion}

In this paper, we introduced Skill-CMIB, a principled framework for multimodal agent skill construction that leverages the Information Bottleneck principle to enhance agent action consistency across trials. By employing a sequential decomposition, Skill-CMIB partitions skills into interpretable text-stage bottlenecks and conditional multimodal bottlenecks, effectively distilling symbolic skill cards while capturing essential residual perceptual evidence. Theoretical analysis and empirical evaluations on benchmarks such as Multimodal-Mind2Web and ScreenSpot demonstrate that our approach significantly improves task success rates and consistency without the prohibitive overhead of inference-time self-consistency. By ensuring sufficiency, minimality, and cross-modal complementarity, Skill-CMIB provides a robust foundation for building reliable multimodal agent Skill Library.

\textbf{LLM usage disclosure:} AI writing tools were used to assist in drafting and verifying the theoretical proofs in this paper. AI tools were used to assist creating the illustrative figures.

%% file: content/9_app.tex
\appendix
\onecolumn

\input{content/3_prelim}

\section{Factorization underlying CMIB}
\label{app:proof_cmib}

\begin{proof}[Proof of \Cref{lem:cmib}]
Applying the chain rule of mutual information to the first term in \Cref{eq:cmib_joint} gives
\begin{equation}
I\bigl((X,M);\,(c,z)\bigr)
=
I\bigl((X,M);c\bigr)
+
I\bigl((X,M);z\mid c\bigr).
\label{eq:cmib_chain1}
\end{equation}
Likewise, applying the chain rule to the second term gives
\begin{equation}
I\bigl((c,z);Y\bigr)
=
I(c;Y)
+
I(z;Y\mid c).
\label{eq:cmib_chain2}
\end{equation}
Substituting \Cref{eq:cmib_chain1,eq:cmib_chain2} into \Cref{eq:cmib_joint}, the objective can be written as
\begin{equation}
\Bigl[I\bigl((X,M);c\bigr)-\beta\,I(c;Y)\Bigr]
+
\Bigl[I\bigl((X,M);z\mid c\bigr)-\beta\,I(z;Y\mid c)\Bigr].
\label{eq:cmib_exact}
\end{equation}
This yields the claimed exact two-stage factorization. The generalized objective in \Cref{eq:cmib_objective} is then obtained by replacing the shared coefficient $\beta$ with stage-specific coefficients $\beta_c$ and $\beta_z$. When $\beta_c=\beta_z=\beta$, \Cref{eq:cmib_objective} reduces to \Cref{eq:cmib_exact}.
\end{proof}

\section{Variational surrogate of the conditional multimodal bottleneck}
\begin{proof}[Proof of \Cref{lem:mm_ib_variational}]
We first control the conditional compression term. Since $Z \perp X \mid (M,c^*)$, we have
\begin{equation}
I((X,M);z\mid c^*) = I(M;z\mid c^*).
\label{eq:mm_ib_proof_markov}
\end{equation}
Now define the aggregated posterior
\begin{equation}
q_\theta(z\mid c^*)
=
\int q_\theta(z\mid M,c^*)\,p(M\mid c^*)\,dM.
\label{eq:mm_ib_agg_post}
\end{equation}
Using the standard variational decomposition,
\begin{align}
&\mathbb{E}_{M\sim p(\cdot\mid c^*)}
\!\left[
\mathrm{KL}\bigl(q_\theta(z\mid M,c^*)\,\|\,r_\phi(z\mid c^*)\bigr)
\right] \nonumber\\
&\qquad=
I(M;z\mid c^*)
+
\mathrm{KL}\bigl(q_\theta(z\mid c^*)\,\|\,r_\phi(z\mid c^*)\bigr)
\;\ge\;
I(M;z\mid c^*).
\label{eq:mm_ib_proof_kl}
\end{align}
Combining \Cref{eq:mm_ib_proof_markov,eq:mm_ib_proof_kl} yields
\begin{equation}
I((X,M);z\mid c^*)
\;\le\;
\mathbb{E}_{M\sim p(\cdot\mid c^*)}
\!\left[
\mathrm{KL}\bigl(q_\theta(z\mid M,c^*)\,\|\,r_\phi(z\mid c^*)\bigr)
\right].
\label{eq:mm_ib_proof_compression}
\end{equation}

Next we lower-bound the conditional relevance term. By the definition of conditional mutual information,
\begin{equation}
I(z;Y\mid c^*) = H(Y\mid c^*) - H(Y\mid z,c^*).
\label{eq:mm_ib_proof_mi}
\end{equation}
For any predictive distribution
$\pi_{\mathrm{tsk}}\!\left(Y \mid [\,g_\omega(z);\; c^*;\; \mathcal{B}\,]\right)$,
the conditional cross-entropy upper-bounds the conditional entropy:
\begin{equation}
H(Y\mid z,c^*)
\;\le\;
\mathbb{E}_{\substack{(M,Y)\sim p(\cdot,\cdot\mid c^*)\\ z\sim q_\theta(\cdot\mid M,c^*)}}
\!\left[
-\log \pi_{\mathrm{tsk}}\!\left(
Y \mid [\,g_\omega(z);\; c^*;\; \mathcal{B}\,]
\right)
\right].
\label{eq:mm_ib_proof_entropy}
\end{equation}
Substituting \Cref{eq:mm_ib_proof_entropy} into \Cref{eq:mm_ib_proof_mi} gives
\begin{equation}
I(z;Y\mid c^*)
\;\ge\;
H(Y\mid c^*)
+
\mathbb{E}_{\substack{(M,Y)\sim p(\cdot,\cdot\mid c^*)\\ z\sim q_\theta(\cdot\mid M,c^*)}}
\!\left[
\log \pi_{\mathrm{tsk}}\!\left(
Y \mid [\,g_\omega(z);\; c^*;\; \mathcal{B}\,]
\right)
\right].
\label{eq:mm_ib_proof_relevance}
\end{equation}

Finally, combining \Cref{eq:mm_ib_proof_compression,eq:mm_ib_proof_relevance} with the definition of $\mathcal{L}_z$ in \Cref{eq:mm_ib_orig}, we obtain
\begin{align}
\mathcal{L}_{z}
&=
I((X,M);z\mid c^*)-\beta_z I(z;Y\mid c^*) \nonumber\\
&\le
\mathbb{E}_{\substack{(M,Y)\sim p(\cdot,\cdot\mid c^*)\\ z\sim q_\theta(\cdot\mid M,c^*)}}
\!\left[
\log \frac{q_\theta(z\mid M,c^*)}{r_\phi(z\mid c^*)}
-
\beta_z \log \pi_{\mathrm{tsk}}\!\left(
Y \mid [\,g_\omega(z);\; c^*;\; \mathcal{B}\,]
\right)
\right]
-\beta_z H(Y\mid c^*) \nonumber\\
&=
\mathcal{J}_{z}(\theta,\phi;c^*)-\beta_z H(Y\mid c^*),
\end{align}
which proves the claim.
\label{app:mm_ib_variational_proof}
\end{proof}

%% file: content/3_prelim.tex
\section{Preliminaries}

\subsection{Agent Skills}
Let $\pi_{\mathrm{tsk}}$ denote a frozen task LLM that interacts with an environment over multiple decision steps. 
An \emph{agent skill} is a reusable procedure $s_i \in \mathcal{S}$, represented by a skill card $c_i \in \mathcal{C}$, that specifies how to solve a class of tasks rather than a single instance \citep{li2026skillsbench,xu2026agent2,xu2026agent}. 
A \emph{skill library} is a finite collection $\mathcal{L}=\{s_i\}_{i=1}^N$ of such skills, as commonly maintained in agentic systems \citep{ling2026agent,li2026organizing}; in many settings, the library is updated across episodes or task chains rather than within a trajectory \citep{fang2025memp,forouzandeh2025learning}. 
At step $t$, the task LLM conditions on the current state
$h_t=(x_t, a_{<t}, o_{<t}, f_{<t})$,
together with an active skill subset $\mathcal{L}_t \subseteq \mathcal{L}$ selected from the larger library \citep{zheng2026skillrouter,li2026organizing,zhou2026memento}$,$ and outputs
$a_t \sim \pi_{\mathrm{tsk}}(\cdot \mid h_t,\mathcal{L}_t)$. 
The environment then returns an observation $o_t$ and feedback $f_t$, yielding the next state $h_{t+1}$. 
Iterating this process produces a rollout
\begin{equation}\label{pre:rollout}
    \tau^{(k)}=\bigl(x_t^{(k)},\,a_t^{(k)},\,o_t^{(k)},\,f_t^{(k)}\bigr)_{t=1}^{T_k},
\end{equation}
and $K$ related trial-and-error rollouts form the bundle $\mathcal{B}=\{\tau^{(k)}\}_{k=1}^K$ \citep{wang2025reinforcement,mi2026procmem,jiang2026xskill}.

\subsection{Information Bottleneck}
The Information Bottleneck (IB) principle~\citep{tishby2000information} provides a foundational framework for learning compressed representations.
Given a source random variable $W$ and a target $Y$, the IB seeks a representation $Z$ by solving
\begin{equation}\label{pre:IB}
\min_{p(z \mid w)}\; I(W;\, Z) - \beta\, I(Z;\, Y), 
\end{equation}
where $I(\cdot;\cdot)$ denotes mutual information and $\beta \geq 0$ controls the trade-off between compression $I(W;Z)$ and relevance $I(Z;Y)$. 
Setting $W = (X, M)$ and $Z = S$, with $X$ and $M$ the aggregate textual and multimodal content associated with rollout bundles (formalized below), recovers a naive multimodal skill bottleneck, 
but this ignores the heterogeneous nature of discrete text and continuous multimodal features,
offers no retrieval interface, and provides no mechanism for ensuring cross-modal complementarity.